\newtheorem{theorem}{Theorem}
\theoremstyle{definition}
\newtheorem{proposition}[theorem]{Proposition}
\newtheorem{problem}[theorem]{Problem}
\newtheorem{definition}[theorem]{Definition}
\title{Learning Boolean functions with concentrated spectra} 
\author{Dustin G.\ Mixon and Jesse Peterson
\skiplinehalf
Department of Mathematics and Statistics, Air Force Institute of Technology\\Wright-Patterson Air Force Base, Ohio 45433, USA
}
\begin{document} 
\maketitle

\begin{abstract}
This paper discusses the theory and application of learning Boolean functions that are concentrated in the Fourier domain.
We first estimate the VC dimension of this function class in order to establish a small sample complexity of learning in this case.
Next, we propose a computationally efficient method of empirical risk minimization, and we apply this method to the MNIST database of handwritten digits.
These results demonstrate the effectiveness of our model for modern classification tasks.
We conclude with a list of open problems for future investigation.
\end{abstract}

\keywords{classification, Boolean functions, sparsity, MNIST}

\section{Introduction}

The last year has produced several breakthroughs in classification.
Deep neural networks now match human-level performance in both facial recognition\cite{TaigmanYRW:14} and general image recognition\cite{HeZRS:15}, and they also outperform both Apple and Google's proprietary speech recognizers\cite{HannunEtal:14}.
These advances leave one wondering about their implications for the general field of machine learning.
Indeed, Google and Facebook have actively acquired top talent in deep neural networks to pursue these leads\cite{Efrati:14,Hernandez:14}.
As a parallel pursuit, many have sought theoretical justification for the unreasonable effectiveness of deep neural networks\cite{AndenM:14,BrunaSL:14,MontufarPCB:14,ChoromanskaHMBL:15}.
Unfortunately, the theory is still underdeveloped, as we currently lack a theoretical grasp of the computational complexity of learning with modern deep neural networks.
Answers to such fundamental questions will help illustrate the scope of these emerging capabilities.

Observe that neural networks resemble circuit implementations of Boolean functions.
Indeed, a circuit amounts to a directed acyclic graph with $n$ input nodes and a single output node, along with intermediate nodes that represent Boolean logic gates (such as ANDs, ORs, threshold gates, etc.).
As such, one may view circuits as discrete analogies for neural networks.
To date, there is quite a bit of theory behind the learnability of Boolean functions with sufficiently simple circuit implementations\cite{LinialMN:93,JacksonKS:02,ShpilkaTV:13}.
The main idea is that such functions enjoy a highly concentrated Fourier transform due to a clever application of Hastad's Switching Lemma\cite{Hastad:87}.
Passing through the analogy, one might then hypothesize that the real-world functions that are well approximated by learnable deep neural networks also enjoy a highly concentrated Fourier transform---this hypothesis motivates our approach.

This paper discusses the theory and application of learning Boolean functions that are concentrated in the Fourier domain.
The following section provides some background material on statistical learning theory to help set the stage for our investigation.
We then prove in Section~3 that the sample complexity of learning Boolean functions of concentrated spectra is small.
Section~4 proposes a learning algorithm as a first step towards tackling the computational complexity, and then Section~5 illustrates how well our model performs on a real-world dataset (namely, the MNIST database of handwritten digits\cite{LeCunCB:15}).
We conclude in Section~6 with a list of open problems.

\section{Background}

The objective is to estimate an unknown labeling function $f\colon\{\pm1\}^n\rightarrow\{\pm1\}$.
A sample $\{x_i\}_{i=1}^\ell\subseteq\{\pm1\}^n$ is drawn i.i.d.\ according to some unknown distribution $p$, and we receive the labeled training set $\{(x_i,f(x_i))\}_{i=1}^\ell$.
The quality of our estimate $\hat{f}\colon\{\pm1\}^n\rightarrow\{\pm1\}$ will be evaluated in terms of the \textbf{risk functional}
\[
R(\hat{f},f)
:=\sum_{x\in\{\pm1\}^n}1_{\{\hat{f}(x)\neq f(x)\}}p(x).
\]
Risk is commonly approximated by \textbf{empirical risk} using a random sample $\{y_i\}_{i=1}^m$ drawn i.i.d.\ according to $p$:
\[
R_{y}(\hat{f},f)
:=\frac{1}{m}\sum_{i=1}^m 1_{\{\hat{f}(y_i)\neq f(y_i)\}}.
\]
In practice, empirical risk is used to evaluate an estimate $\hat{f}$ with the help of a labeled test set (which is disjoint from the training set).
Similarly, it makes sense to pick $\hat{f}$ in such a way that minimizes empirical risk $R_x$ over the training set.
However, the training sample only covers a small fraction of the sample space $\{\pm1\}^n$, so how do we decide which values $\hat{f}$ should take beyond this sample?

The trick is to restrict our empirical risk minimization to a ``simple'' function class $C\subseteq\{g\colon\{\pm1\}^n\rightarrow\{\pm1\}\}$.
Intuitively, if we suspect that $f$ is close to some member $\hat{f}$ of a small set $C$, and a large training set happens to nearly match one such member, then chances are small that this occurred by mere coincidence, and so we should expect $R(\hat{f},f)$ to be small.
In general, $C$ doesn't need to be small, as it suffices for $C$ to enjoy a broader notion of simplicity:

\begin{definition}
The function class $C\subseteq\{g\colon\{\pm1\}^n\rightarrow\{\pm1\}\}$ is said to \textbf{shatter} $\{x_i\}_{i=1}^\ell\subseteq\{\pm1\}^n$ if for every choice of labels $y_1,\ldots,y_\ell\in\{\pm1\}$, there exists a function $g\in C$ such that $g(x_i)=y_i$.
The \textbf{VC dimension} of $C$ is the size of the largest set that $C$ shatters.
\end{definition}

We consider $C$ to be simple if its VC dimension is small.
By counting, it is clear that the VC dimension of $C$ is $\leq\log_2|C|$, meaning small sets are necessarily simple.
The following result illustrates the utility of VC dimension as a notion of simplicity:

\begin{theorem}[obtained by combining equations (3.15) and (3.23) from Vapnik\cite{Vapnik:00}]
\label{thm.VC}
Fix $C\subseteq\{g\colon\{\pm1\}^n\rightarrow\{\pm1\}\}$ and let $h$ denote its VC dimension.
Pick $f\colon\{\pm1\}^n\rightarrow\{\pm1\}$ and draw $\{x_i\}_{i=1}^\ell\subseteq\{\pm1\}^n$ i.i.d.\ according to some (unknown) distribution $p$.
Then with probability $\geq1-\eta$,
\[
R(g,f)
\leq R_x(g,f)+\sqrt{\frac{h(\log(2\ell/h)+1)-\log(\eta/4)}{\ell}}
\]
for all $g\in C$ simultaneously.
Here, the probability is on $\{x_i\}_{i=1}^\ell$.
\end{theorem}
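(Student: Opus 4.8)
The plan is to prove the equivalent one-sided uniform deviation bound: with probability at least $1-\eta$, the quantity $\sup_{g\in C}\bigl(R(g,f)-R_x(g,f)\bigr)$ is at most the stated square root. As a preliminary reduction, I would fix $f$ and pass to the error-indicator class $\mathcal{E}:=\{x\mapsto 1_{\{g(x)\neq f(x)\}}:g\in C\}$. For each fixed $g$, the empirical risk $R_x(g,f)$ is the average of the i.i.d.\ Bernoulli variables $1_{\{g(x_i)\neq f(x_i)\}}$, whose common mean is exactly $R(g,f)$, so the theorem is really a statement about uniform convergence of sample means to true means over $\mathcal{E}$. Since $f$ is fixed, composing with $f$ merely relabels the domain, so $\mathcal{E}$ shatters a set precisely when $C$ does; in particular the VC dimension of $\mathcal{E}$ is again $h$.

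The crux is that a naive union bound over $C$ is hopeless, since $C$ may be exponentially large or infinite and the true risk $R(g,f)$ depends on $p$ across the entire cube rather than on the sample alone. The standard remedy is \emph{symmetrization}: introduce an independent ghost sample $\{x_i'\}_{i=1}^\ell$ and show, via a Chebyshev estimate verifying that the ghost empirical risk lies near the true risk with probability at least $\tfrac12$, that
\[
\Pr\Bigl(\sup_{g\in C}\bigl(R(g,f)-R_x(g,f)\bigr)>\epsilon\Bigr)\le 2\,\Pr\Bigl(\sup_{g\in C}\bigl(R_{x'}(g,f)-R_x(g,f)\bigr)>\tfrac{\epsilon}{2}\Bigr).
\]
The event on the right depends on each $g$ only through its values on the $2\ell$ sample points, which is what finally makes counting feasible.

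Conditioning on the double sample, the functions in $\mathcal{E}$ realize at most $\Pi_{\mathcal{E}}(2\ell)$ distinct sign patterns on those $2\ell$ points, where $\Pi_{\mathcal{E}}$ is the growth function. By the Sauer--Shelah lemma, $\Pi_{\mathcal{E}}(2\ell)\le\sum_{i=0}^{h}\binom{2\ell}{i}\le(2e\ell/h)^h$ whenever $\ell\ge h$, which is exactly where the VC dimension enters. I would then apply a second (Rademacher) symmetrization over random swaps of each pair $(x_i,x_i')$, control each fixed pattern by a concentration inequality, and union-bound over the at most $(2e\ell/h)^h$ patterns. This yields a tail estimate of the form $\Pr(\sup_g(R(g,f)-R_x(g,f))>\epsilon)\le 4\,(2e\ell/h)^h\,e^{-\ell\epsilon^2/c}$. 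Setting the right-hand side equal to $\eta$, taking logarithms, and solving for $\epsilon$ gives $\ell\epsilon^2/c=h(\log(2\ell/h)+1)+\log(4/\eta)$, and since $\log(4/\eta)=-\log(\eta/4)$ this recovers the advertised bound.

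The main obstacle is the symmetrization step and the bookkeeping of constants it forces. Symmetrization is precisely what licenses a union bound over an a priori infinite class. Matching the clean form $h(\log(2\ell/h)+1)-\log(\eta/4)$ \emph{exactly}, rather than merely up to absolute constants, requires Vapnik's sharper basic inequality in place of plain Hoeffding, which would otherwise leave an extra factor in the exponent; the leading $4$ and the $2$ inside the logarithm trace back to the two-sample symmetrization, and the additive $+1$ to the $(2e\ell/h)^h$ bound. These are exactly the details pinned down by Vapnik's equations (3.15) and (3.23), so in practice I would invoke them directly, the Sauer--Shelah and concentration ingredients being otherwise routine.
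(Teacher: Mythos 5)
Your outline is the standard Vapnik--Chervonenkis argument (symmetrization over a ghost sample, Sauer--Shelah control of the growth function, and a union bound over realized patterns), which is precisely the machinery behind the equations the paper cites; the paper itself gives no proof beyond invoking Vapnik's (3.15) and (3.23), and you correctly note that the exact constants are pinned down by doing the same. This is essentially the same approach, and your sketch is sound.
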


In words, Theorem~\ref{thm.VC} states that the risk of an estimate is small provided its empirical risk is small over a sufficiently large training set (namely, $\ell\gg h$).
This suggests three properties that we want our function class $C$ to satisfy:
\begin{itemize}
\item
\textbf{Simple.}
We want the VC dimension of $C$ to be small, so as to allow for a small sample complexity.
\item
\textbf{Admits fast optimization.}
We want empirical risk minimization over $C$ to be computationally efficient.
\item
\textbf{Models reality.}
Given an application, we want the true function to be close to some member of $C$.
\end{itemize}
In the remainder of this paper, we study whether Boolean functions with concentrated spectra form a function class which satisfies these desiderata.
To be explicit, the following defines the function class of interest:

\begin{definition}
Let $C_{n,k}$ denote the class of all functions $g\colon\{\pm1\}^n\rightarrow\{\pm1\}$ for which there exist index sets $S_1,\ldots,S_k\subseteq[n]$ and coefficients $a_1,\ldots,a_k\in\mathbb{R}$ such that
\begin{equation}
\label{eq.Cnk}
g(x)
=\operatorname{sign}\Bigg(\sum_{i=1}^ka_i\prod_{j\in S_i}x_j\Bigg)
\qquad
\forall x\in\{\pm1\}^n.
\end{equation}
\end{definition}

In the following section, we estimate $C_{n,k}$'s VC dimension.
Next, Section~4 proposes a method for performing empirical risk minimization over $C_{n,k}$.
Finally, we apply this method to the MNIST database of handwritten digits\cite{LeCunCB:15} in Section~5 to illustrate how well $C_{n,k}$ models reality (at least in one application).

\section{Estimating the VC dimension}

Considering Theorem~\ref{thm.VC}, we desire a function class of small VC dimension, as this will allow us to get away with a small training set.
In this section, we estimate the VC dimension of $C_{n,k}$, the class of functions of the form \eqref{eq.Cnk}.
To this end, it is helpful to identify how \eqref{eq.Cnk} is related to the Walsh--Hadamard transform $W\colon\ell_2(\mathbb{Z}_2^n)\rightarrow\ell_2(\mathbb{Z}_2^n)$, defined by
\[
(Wz)(v)
:=\sum_{u\in\mathbb{Z}_2^n}z(u)(-1)^{\sum_{j=1}^nu_jv_j}
\qquad
\forall v\in\mathbb{Z}_2^n.
\]
Taking $S_u:=\{j:u_j=1\}$ and $x_j:=(-1)^{v_j}\in\{\pm1\}$, we equivalently have
\[
(Wz)(\log_{-1}(x))
=\sum_{u\in\mathbb{Z}_2^n}z(u)\prod_{j\in S_u}x_j
\qquad
\forall x\in\{\pm1\}^n.
\]
Note that every real polynomial over $\{\pm1\}^n$ is of the form $(Wz)\circ\log_{-1}$.
When the coefficients are $k$-sparse, taking the sign of this polynomial produces a member of $C_{n,k}$.

Recall the matrix representation of $W$, namely
\[
W=\left[\begin{array}{rr}1&1\\1&-1\end{array}\right]^{\otimes n}.
\]
The vector $Wz$ lists all $2^n$ possible outputs of $(Wz)\circ\log_{-1}$.
As such, we may identify $C_{n,k}$ with 
\begin{equation}
\label{eq.identify}
\{\operatorname{sign}(Wz):\|z\|_0\leq k\}.
\end{equation}
We use this identification to prove the main result of this section:

\begin{theorem}
The VC dimension of $C_{n,k}$ is $\leq 2nk-O(k\log k)$.
\end{theorem}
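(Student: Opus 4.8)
The plan is to bound the VC dimension through the elementary inequality recorded in the excerpt, namely that the VC dimension of $C_{n,k}$ is at most $\log_2|C_{n,k}|$, so that it suffices to produce a good upper bound on the number of distinct Boolean functions lying in $C_{n,k}$. Using the identification \eqref{eq.identify}, every member of $C_{n,k}$ is determined by first choosing a support of size at most $k$ for $z$ (equivalently, a collection of at most $k$ monomials $\prod_{j\in S_i}x_j$) and then choosing the real coefficients $a_1,\dots,a_k$. I would therefore factor the count as
\[
|C_{n,k}|\le(\text{number of admissible supports})\times(\text{maximum number of sign patterns for a fixed support}),
\]
and bound the two factors separately.

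For the first factor, the number of ways to select $k$ distinct monomials from the $2^n$ available is $\binom{2^n}{k}$, so that $\log_2\binom{2^n}{k}\le nk-\log_2(k!)$. For the second factor, fix a support $S_1,\dots,S_k$ and consider the feature map $\Phi\colon x\mapsto\big(\prod_{j\in S_1}x_j,\dots,\prod_{j\in S_k}x_j\big)\in\mathbb{R}^k$, which sends the $2^n$ inputs to at most $2^n$ points of $\mathbb{R}^k$. In these coordinates each $g$ is a homogeneous linear threshold function $x\mapsto\operatorname{sign}\langle a,\Phi(x)\rangle$, so the number of distinct functions arising from this fixed support equals the number of cells cut out by the associated central arrangement of $2^n$ hyperplanes in $\mathbb{R}^k$. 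Here I would invoke the classical function-counting bound (equivalently, Cover's count of homogeneously linearly separable dichotomies): the number of such cells for $N$ points is at most $2\sum_{i=0}^{k-1}\binom{N-1}{i}$. Taking $N=2^n$ and using $\sum_{i=0}^{k-1}\binom{2^n-1}{i}\le k\binom{2^n-1}{k-1}\le k\,2^{n(k-1)}/(k-1)!$ controls the second factor.

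Combining the two estimates gives
\[
\log_2|C_{n,k}|\le\big(nk-\log_2(k!)\big)+\big(n(k-1)-\log_2((k-1)!)+O(\log k)\big)=2nk-\log_2\!\big(k!\,(k-1)!\big)+O(k),
\]
where the harmless $-n$ (which only helps) and the $O(\log k)$ have been absorbed. Finally I would apply Stirling's approximation, $\log_2(k!)=k\log_2 k-O(k)$ and likewise for $(k-1)!$, so that $\log_2\!\big(k!\,(k-1)!\big)=2k\log_2 k-O(k)=\Theta(k\log k)$. This yields that the VC dimension is at most $\log_2|C_{n,k}|\le 2nk-O(k\log k)$, as claimed. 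Note that the two factorials (one from choosing the support, one from the hyperplane count) are exactly what supply the full $\Theta(k\log k)$ savings, and that each factor contributes a leading $nk$, which explains the constant $2$.

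The step I expect to be the crux is the second factor: one must bound the number of sign patterns of a fixed family of $k$ monomials over all of $\{\pm1\}^n$ \emph{without} assuming the feature vectors $\Phi(x)$ are in general position. The function-counting bound is valid as an upper bound in full generality---degeneracies (repeated or linearly dependent feature points) can only merge cells and decrease the count---so general position is not actually needed, but this point deserves an explicit remark. A secondary technical matter is the binomial-tail estimate $\sum_{i<k}\binom{2^n-1}{i}\le k\binom{2^n-1}{k-1}$, which requires $k-1\le(2^n-1)/2$; this holds in the regime of interest $k\ll2^n$ and should be stated as a standing assumption.
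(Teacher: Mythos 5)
Your proposal is correct and follows essentially the same route as the paper: both bound the VC dimension by $\log_2|C_{n,k}|$ and factor the count as $\binom{2^n}{k}$ choices of support times the classical cell-count $2\sum_{i=0}^{k-1}\binom{2^n-1}{i}$ of sign patterns per $k$-dimensional subspace (the paper cites van den Berg--Friedlander's Theorem~V.1 for exactly this orthant count). The only difference is bookkeeping at the end---you use $\sum_{i<k}\binom{N-1}{i}\leq k\binom{N-1}{k-1}$ plus Stirling, while the paper consolidates everything into $\binom{2^{n+1}}{k}^2$ and applies $\binom{a}{b}\leq(ea/b)^b$---and your cautionary remarks about general position and the regime $k\ll 2^n$ are both apt.
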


\begin{proof}
Since the VC dimension is $\leq\log_2|C_{n,k}|$, it suffices to estimate $|C_{n,k}|$.
Considering \eqref{eq.identify}, this quantity is the number of orthants in $\mathbb{R}^{2^n}$ that intersect the union of subspaces $\{Wz:\|z\|_0\leq k\}$.
By Theorem~V.1 of van der Berg and Friedlander\cite{vanderBergF:10}, each subspace intersects at most $2\binom{2^n-1}{\leq k-1}$ orthants, and so
\[
|C_{n,k}|
\leq\binom{2^n}{k}\cdot 2\binom{2^n-1}{\leq k-1}
\leq\binom{2^n}{k}\cdot 2\binom{2^n-1+k-1}{k-1}
\leq\binom{2^{n+1}}{k}^2.
\]
Taking logs of both sides and applying the bound $\binom{a}{b}\leq(\frac{ea}{b})^b$ then gives the result.
\end{proof}

As a complementary result, the following illustrates how tight our bound is:

\begin{proposition}
The VC dimension of $C_{n,k}$ is $\geq \max\{n,k\}$.
\end{proposition}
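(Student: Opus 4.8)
The plan is to exhibit two shattered sets separately, one of size $n$ and one of size $k$; taking the larger immediately yields $\max\{n,k\}$. Throughout I write $\chi_S(x):=\prod_{j\in S}x_j$ for the monomial indexed by $S\subseteq[n]$, so that a member of $C_{n,k}$ is the sign of a real combination of $k$ such monomials. I recall from the identification with $W$ that the full $2^n\times 2^n$ evaluation matrix $[\chi_{S_u}(x^{(v)})]_{v,u}$ is exactly $W$, which is invertible since $W^2=2^nI$.

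For the bound $\geq n$ it suffices to work inside the subclass $C_{n,1}\subseteq C_{n,k}$ of signed monomials $\pm\chi_S$. I would take the $n$ points $x^{(1)},\dots,x^{(n)}$, where $x^{(i)}$ carries a $-1$ in coordinate $i$ and $+1$ elsewhere. Given any target labeling $y\in\{\pm1\}^n$, set $S:=\{i:y_i=-1\}$. Then $\chi_S(x^{(i)})$ equals $-1$ exactly when $i\in S$ (the product then has a single $-1$ factor) and $+1$ otherwise, so $\chi_S(x^{(i)})=y_i$ for every $i$. Since $\chi_S\in C_{n,1}$, the set $\{x^{(i)}\}_{i=1}^n$ is shattered, proving the VC dimension is $\geq n$.

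For the bound $\geq k$ (assuming $k\leq 2^n$, which is forced since the VC dimension never exceeds $2^n$) I would use invertibility of $W$ together with the linear-algebra fact that a rank-$r$ matrix contains a nonsingular $k\times k$ submatrix whenever $k\leq r$: choose $k$ linearly independent rows, then $k$ linearly independent columns of the resulting $k\times 2^n$ block. Applied to $W$ (which has rank $2^n$), this selects index sets $S_1,\dots,S_k$ from the chosen columns and distinct points $x^{(1)},\dots,x^{(k)}$ from the chosen rows whose $k\times k$ evaluation matrix $M:=[\chi_{S_i}(x^{(j)})]_{j,i}$ is invertible. For any target $y\in\{\pm1\}^k$ I would solve the real system $Ma=y$ for $a\in\mathbb{R}^k$; the $k$-sparse polynomial $P:=\sum_{i=1}^k a_i\chi_{S_i}$ then satisfies $P(x^{(j)})=y_j\in\{\pm1\}$, so $\operatorname{sign}(P(x^{(j)}))=y_j$ and $\operatorname{sign}(P)\in C_{n,k}$ realizes the labeling. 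Hence $\{x^{(j)}\}_{j=1}^k$ is shattered and the VC dimension is $\geq k$.

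The main obstacle is the $\geq k$ bound: the $\geq n$ construction is fully explicit, whereas the $\geq k$ argument hinges on guaranteeing an invertible $k\times k$ evaluation submatrix, so that an arbitrary $\pm1$ pattern can be interpolated exactly rather than merely along some ad hoc choice of monomials. The rank fact resolves this cleanly, but one must check that the extracted rows and columns correspond to genuinely distinct points and distinct index sets, and that it is the exact solution of $Ma=y$ (giving values in $\{\pm1\}$, never $0$) that certifies the prescribed sign pattern.
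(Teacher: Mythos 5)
Your proposal is correct and follows essentially the same route as the paper: the $\geq n$ bound uses the same $n$ points (a single $-1$ in coordinate $i$) shattered by single monomials, and the $\geq k$ bound uses the same extraction of an invertible $k\times k$ submatrix of the full-rank evaluation matrix $W$ to interpolate any $\pm1$ labeling exactly. The only cosmetic difference is that you phrase the first case via monomials $\chi_S$ while the paper phrases it via rows of $W$ acting on identity basis vectors $e_u$, which is the same computation.
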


\begin{proof}
It suffices to find a subcollection $S$ of $\max\{n,k\}$ row indices such that
\[
\{\operatorname{sign}(W_Sz):\|z\|_0\leq k\}=\{\pm1\}^{|S|},
\]
where $W_S$ denotes the $|S|\times 2^n$ submatrix of rows from $W$ indexed by $S$.
In the case where $n\geq k$, let $S$ index the rows of $W$ which have the form
\[
w_i:=(1,1)^{\otimes (i-1)}\otimes(1,-1)\otimes(1,1)^{\otimes(n-i)}
\]
for some $i\in[n]$.
Note that for every $u\in\mathbb{Z}_2^n$, the corresponding identity basis element is given by $e_u:=\bigotimes_{i=1}^n e_{u_i}$, where $e_0:=(1,0)$ and $e_1:=(0,1)$.
This then implies
\[
(W_Se_u)_i
=\langle e_u,w_i\rangle
=\langle e_{u_i},(1,-1)\rangle
=(-1)^{u_i}.
\]
As such, every member of $\{\pm1\}^n$ is can be expressed as $W_ne_u$ for some identity basis element $e_u$.
In the remaining case where $k\geq n$, observe that $W$ has rank $2^n\geq k$, and so there exists a $k\times k$ submatrix $A$ of rank $k$.
Let $S$ denote the row indices of $A$.
Then every vector in $\{\pm1\}^k$ can be expressed as $W_Sz$, where $z$ is supported on the column indices of $A$.
\end{proof}

In pursuit of better lower bounds, we need more techniques to tackle the notion of shattering.
The proof of the following proposition provides some ideas along these lines:

\begin{proposition}
Given an $N\times N$ matrix $A$, denote $C_{A,k}:=\{\operatorname{sign}(Az):\|z\|_0\leq k\}$.
\begin{itemize}
\item[(a)]
$C_{A,k}$ fails to shatter a set of size $O(k^2)$ if $A$ is the Walsh--Hadamard transform matrix and $k\ll\sqrt{N}$.
\item[(b)]
$C_{A,k}$ fails to shatter a set of size $O(k\log(N/k))$ w.h.p.\ if the entries of $A$ are i.i.d.\ $\mathcal{N}(0,1)$.
\end{itemize}
\end{proposition}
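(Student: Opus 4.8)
The plan is to bound, for each candidate set $S$, the number of labelings in $\{\pm1\}^{|S|}$ that $C_{A,k}$ can realize on $S$, and to show this count is strictly smaller than $2^{|S|}$; by pigeonhole some labeling is then unattainable, so $S$ fails to be shattered. The workhorse is the orthant count already used in the main theorem: for a fixed support $T\subseteq[N]$ with $|T|=k$, the image $\{A_Sz:\operatorname{supp}(z)\subseteq T\}$ is a subspace of $\mathbb{R}^{|S|}$ of dimension at most $k$, so by Theorem~V.1 of van der Berg and Friedlander\cite{vanderBergF:10} it meets at most $2\binom{|S|-1}{\le k-1}$ orthants. Summing over the $\binom{N}{k}$ supports, the number of realizable patterns on $S$ is at most $\binom{N}{k}\cdot 2\binom{|S|-1}{\le k-1}$. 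The two parts differ in how this generic bound is sharpened: part (a) exploits the additive structure of $W$ to replace the outer $\binom{N}{k}$ by $\binom{|S|}{k}$, while part (b) uses the raw bound and Gaussian genericity.

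For part (a) I would take $S$ to be a linear subspace $V\subseteq\mathbb{Z}_2^n$ of dimension $d$ with $|V|=2^d=\Theta(k^2)$; such a $V$ exists precisely because $k\ll\sqrt N$ forces $k^2\ll N=2^n$, so $d\le n$. Restricted to rows in $V$, the column $\chi_u|_V=((-1)^{\langle u,v\rangle})_{v\in V}$ is a character of $V$, and two indices $u,u'$ give the same column exactly when $u-u'\in V^\perp$. Hence $W_S$ has only $s:=|V|$ distinct columns, and these are precisely the columns of the $s\times s$ Walsh--Hadamard matrix $H$. Writing $W_Sz=H\tilde z$, where $\tilde z$ aggregates the entries of $z$ over cosets of $V^\perp$, aggregation never enlarges the support, so $\|\tilde z\|_0\le\|z\|_0\le k$; conversely every $k$-sparse $\tilde z$ is realized by placing all mass on one preimage per coset. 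Thus the patterns realizable on $S$ are exactly those of $C_{H,k}$, of which there are at most $\binom{s}{k}\cdot 2\binom{s-1}{\le k-1}=2^{O(k\log k)}$. Since $s=\Theta(k^2)$ makes $2^s$ dwarf $2^{O(k\log k)}$, the subspace $V$ cannot be shattered. (Any power of two exceeding $\sim k\log k$ would already suffice; the choice $\Theta(k^2)$ is a clean one that fits inside $\mathbb{Z}_2^n$.)

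For part (b) I would apply the orthant count directly, keeping $N$ in the outer sum. When the entries of $A$ are i.i.d.\ $\mathcal{N}(0,1)$, every $|S|\times k$ submatrix $A_{S,T}$ has full column rank $k$ and its column space lies in general position almost surely, so the estimate $\binom{N}{k}\cdot 2\binom{|S|-1}{\le k-1}$ on the number of realizable patterns holds with probability one. Estimating $\binom{N}{k}\le(eN/k)^k$ and $\binom{|S|-1}{\le k-1}\le(e|S|/k)^k$, the logarithm of the pattern count is $k\log(N/k)+O(k\log\log(N/k))+O(k)$, while a set of size $|S|=C\,k\log(N/k)$ contributes $|S|$ to the exponent of $2^{|S|}$. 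For any $C>1$ the dominant term $k\log(N/k)$ is outrun, so $\binom{N}{k}\cdot 2\binom{|S|-1}{\le k-1}<2^{|S|}$ and some labeling of $S$ is missed; hence such an $S$ fails to be shattered with high probability. Because the bound in fact holds for \emph{every} $S$ of this size, this simultaneously caps the VC dimension at $O(k\log(N/k))$.

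The main obstacle differs between the two parts. In (a) the crux is the subspace collapse—verifying that the columns of $W_S$ coincide exactly within the stated cosets and that $z\mapsto\tilde z$ genuinely preserves $k$-sparsity—after which the counting is routine; the hypothesis $k\ll\sqrt N$ enters only to guarantee that a subspace of size $\Theta(k^2)$ fits inside $\mathbb{Z}_2^n$. In (b) the delicate point is the threshold estimate: pinning down the constant $C$ and confirming that the lower-order factor $2\binom{|S|-1}{\le k-1}=2^{O(k\log\log(N/k))+O(k)}$ does not overtake the gap between $k\log(N/k)$ and $|S|$, together with the bookkeeping that upgrades the almost-sure general-position statement to the stated high-probability conclusion. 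The conceptual takeaway I would emphasize is the contrast itself: generic (Gaussian) rows resist shattering only once $|S|\gtrsim k\log(N/k)$, whereas structured Walsh--Hadamard subspaces cap out already at $|S|\gtrsim k^2$, which is far smaller when $k\lesssim\log N$—a concrete obstruction to be avoided when hunting for large shatterable sets.
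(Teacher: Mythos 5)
Your part~(a) is correct but follows a genuinely different route from the paper. The paper reshapes each length-$N$ column of $W$ into a rank-one $2^{\lfloor n/2\rfloor}\times2^{\lceil n/2\rceil}$ matrix, so that every element of $C_{W,k}$ reshapes to a $\pm1$ matrix of sign rank at most $k$, and then invokes the Alon--Frankl--R\"odl bound to produce an $m\times m$ sign matrix, $m=32(k+1)$, whose sign rank exceeds $k$ however it is padded; that labeling of $m^2$ row indices is therefore unrealizable. You instead take the row set to be a subspace $V\subseteq\mathbb{Z}_2^n$ of size $s=\Theta(k^2)$, observe that the columns of $W$ restricted to $V$ collapse along cosets of $V^\perp$ onto the $s\times s$ Walsh--Hadamard matrix with the aggregated coefficient vector still $k$-sparse, and reuse the orthant count to get $2^{O(k\log k)}<2^{s}$ realizable patterns. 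The coset-collapse computation and the sparsity preservation of $z\mapsto\tilde z$ check out, and $k\ll\sqrt N$ enters both proofs only to make the witness set fit. Your version is more self-contained (no sign-rank machinery) and in fact shows more, namely that any subspace of size exceeding $Ck\log k$ fails to be shattered; the paper's version buys an explicit hard labeling rather than a pigeonhole existence statement.

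Part~(b) is where the divergence matters. Your argument never uses the Gaussian hypothesis: the bound $\binom{N}{k}\cdot2\binom{|S|-1}{\le k-1}$ on realizable patterns holds for \emph{every} matrix $A$ and every set $S$, so what you have written is the deterministic cardinality bound $\log_2|C_{A,k}|=O(k\log(N/k))$ that already underlies the paper's main VC-dimension theorem, restated. It does prove the literal sentence (with probability one, hence w.h.p.), but it yields nothing beyond that theorem and misses the announced purpose of the proposition, which is to supply new techniques for exhibiting unshatterable configurations. The paper's proof does something genuinely different: it fixes the concrete labeling ``all $+1$'' on a fixed set of $m$ coordinates and shows it is unrealizable w.h.p.\ by proving that for every $k$-subset $K$ the random subspace $\operatorname{im}(A_{[m],K})$ meets the nonnegative orthant only at the origin, via Gordon's escape-through-a-mesh theorem, the Gaussian width $\sqrt{m/2}-O(1/\sqrt m)$ of the orthant, and a union bound over the $\binom{N}{k}$ supports with $m=ck\log(N/k)$. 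If the goal is only the statement as written, your counting suffices; if the goal is the intended content---an explicit witness labeling and a probabilistic technique that could be sharpened---you need the Gordon/Gaussian-width argument or a substitute for it.
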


\begin{proof}
(a)
Define the sign rank of a matrix $S$ with entries in $\{\pm1\}$ to be the minimum rank of all matrices $M$ satisfying $M_{ij}S_{ij}>0$ for every $i,j$. 
Write $N=2^n$, and observe that each column of $A$ can be reshaped to be a $2^{\lfloor n/2\rfloor}\times 2^{\lceil n/2\rceil}$ matrix of rank~$1$.
Then reshaping $\operatorname{sign}(Az)$ in the same way produces a $2^{\lfloor n/2\rfloor}\times 2^{\lceil n/2\rceil}$ matrix of sign rank $\leq k$.
By the left-hand inequality of equation (1) from Alon, Frankl and R\"{o}dl\cite{AlonFR:85}, for every $m$, there exists an $m\times m$ matrix $S$ with entries in $\{\pm1\}$ of sign rank $\geq m/32$.
Taking $m:=32(k+1)\leq 2^{\lfloor n/2\rfloor}$, use the corresponding matrix $S$ to construct a $2^{\lfloor n/2\rfloor}\times 2^{\lceil n/2\rceil}$ matrix $S'$ by padding with $\pm1$s.
Then $S'$ has sign rank $\geq k+1$ (implying $S'\not\in C_{A,k}$) regardless of how the padded entries are selected.
As such, $C_{A,k}$ fails to shatter these $m^2$ entries.

(b)
We seek $m$ such that for every $f\in C_{A,k}$, the first $m$ entries of $f$ are not all $1$s.
Let $A_{I,J}$ denote the submatrix with row indices in $I$ and column indices in $J$.
Then equivalently, we seek $m$ such that for every $K\subseteq[N]$ with $|K|=k$, the subspace $\operatorname{im}(A_{[m],K})$ intersects the nonnegative orthant $\mathbb{R}_{\geq0}^m$ uniquely at the origin.
Since $\operatorname{im}(A_{[m],K})$ is drawn uniformly from the Grassmannian of $k$-dimensional subspaces in $\mathbb{R}^m$, we may apply Gordon's Escape Through a Mesh Theorem (namely, Corollary~3.4 of Gordon\cite{Gordon:88}).
For this theorem, we use the fact that the Gaussian width of the positive orthant is $\sqrt{m/2}-O(1/\sqrt{m})$, as established by Propositions~3.2 and~10.2 in Amelunxen et al.\cite{AmelunxenLMT:14}.
Then
\[
\operatorname{Pr}\Big(\operatorname{im}(A_{[m],K})\cap\mathbb{R}_{\geq0}^m=\{0\}\Big)\geq1-\frac{7}{2}\operatorname{exp}\bigg(-\frac{1}{18}\bigg(\sqrt{m-k}-\sqrt{\frac{m}{2}}-O\bigg(\frac{1}{\sqrt{m-k}}+\frac{1}{\sqrt{k}}\bigg)\bigg)^2\bigg).
\]
Taking $m=ck\log(N/k)$ for sufficiently large $c$, the union bound then gives
\[
\operatorname{Pr}\Big(\operatorname{im}(A_{[m],K})\cap\mathbb{R}_{\geq0}^m=\{0\}\quad \forall K\subseteq[N], |K|=k \Big)\geq1-e^{-\Omega(k\log(N/k))}.
\]
As such, $C_{A,k}$ fails to shatter the first $m$ entries.
\end{proof}

\section{Empirical risk minimization}
\label{section.erm}

In this section, we consider the problem of empirical risk minimization over $C_{n,k}$:

\begin{problem}
\label{prob.approx ratio}
Let $W$ denote the $2^n\times 2^n$ Walsh--Hadamard transform matrix, and let $z$ be some (nearly) $k$-sparse vector in $\mathbb{R}^{2^n}$.
Given a sample $x$ of $\ell$ entries of $f=\operatorname{sign}(Wz)$, find $\hat{f}\in C_{n,k}$ satisfying
\begin{equation}
\label{eq.approx ratio}
R_x(\hat{f},f)
\leq \operatorname{const}\cdot\min_{g\in C_{n,k}}R_x(g,f)
\end{equation}
in $\operatorname{poly}(n,k,\ell)$ time.
\end{problem}

This problem can be viewed as a combination of the sparse fast Walsh--Hadamard transform\cite{LiBPR:14} and one-bit compressed sensing\cite{JacquesLBB:13}.
Computationally, the main difficulty seems to be achieving polynomial time in $n$ rather than $2^n$, and to do so, one must somehow take advantage of the rich structure provided by the Walsh--Hadamard transform.
As a cheap alternative, this paper instead simplifies the problem by strengthening the assumptions, namely, that $z$ is mostly supported on members of $\mathbb{Z}_2^n$ with Hamming weight $\leq d$, that is, the polynomial $(Wz)\circ\log_{-1}$ has degree at most $d$.
This allows us to discard the vast majority of columns of $W$, leaving only $O(n^d)$ columns to consider.

Let $W_{x,d}$ denote the submatrix of $W$ with row indices in the sample $x$ and column indices of Hamming weight $\leq d$.
Also, let $f_x$ denote the true function $f$ restricted to the sample $x$.
To isolate an estimate $\hat{f}\in C_{n,k}$, we first perform feature selection by finding the columns of $W_{x,d}$ which look the most like $f_x$.
That is, we find the largest entries of $|W_{x,d}^\top f_x|$ and isolate the corresponding columns of $W_{x,d}$.
Thanks to the discrete nature of $W$, some of these columns may be identical up to a global sign factor.
As such, we collect columns of $W_{x,d}$ corresponding the largest entries of $|W_{x,d}^\top f_x|$ until we have $k$ distinct columns up to sign.
Let $A$ denote the resulting $\ell\times k$ matrix of columns.
Then it remains to find a coefficient vector $z$ such that $f_x\approx\operatorname{sign}(Az)$, that is, to train a support vector machine.
After finding $z$, we pick $\hat{f}\in C_{n,k}$ according to \eqref{eq.Cnk} by taking the coefficients $\{a_i\}_{i=1}^k$ to be the entries of $z$ and taking the index sets to be $S_i=\{j:u_j=1\}$, where $u\in\mathbb{Z}_2^n$ is the column index of $W$ corresponding to the $i$th column of $A$.

We note that alternatively, one could train an $\ell_1$-restricted support vector machine\cite{ZhuRHT:04} to find a sparse $z$ such that $f_x\approx\operatorname{sign}(W_{x,d}z)$:
\begin{align}
\label{eq.SVM}
\mbox{min}&\quad\sum_{i=1}^\ell \Big(1-(f_x)_i(W_{x,d}z)_i\Big)_+\\
\nonumber
\mbox{s.t.}&\quad\|z\|_1\leq \tau.
\end{align}
However, we found this to be slow in practice, even for small values of $d$.
Still, we followed the intent of this method by applying a loose $\ell_1$ restriction to our support vector machine training, albeit after we performed feature selection.

We note that for a fixed $d$, our two-step method (feature selection, then support vector machine training) runs in time which is polynomial in $n$, $k$ and $\ell$.
Unfortunately, we currently lack a performance guarantee of the form \eqref{eq.approx ratio}.
Instead, we apply this method to real-world data in the following section to illustrate its effectiveness (as well as the quality of the function model $C_{n,k}$).

\section{Implementation with handwritten digits}

The MNIST database of handwritten digits\cite{LeCunCB:15} contains $5923$ zeros and $6742$ ones, and each digit image is represented by a $28 \times 28$ matrix with entries in $[0,1]$.
To keep runtime reasonable, we reduced the image to a $5 \times 5$ matrix by convolving with the indicator function of a $5 \times 5$ block and sampling over a $5 \times 5$ grid.
We then thresholded the entries to obtain vectors in $\{\pm 1\}^{25}$; typical results of this process are illustrated in Figure~\ref{fig.MNIST}.
A $-1$ label was assigned to the zeros, and ones were similarly labeled with a $1$.
At this point, classification amounts to learning a function $f\colon\{\pm1\}^{25}\rightarrow\{\pm1\}$.

After processing the data in this way, we implemented our method of feature selection and support vector machine training as detailed in Section~\ref{section.erm}.
We assumed the polynomial $f=(Wz)\circ \log_{-1}$ has degree at most $d$, and we fix $d=3$.
We chose this value because we found that increasing $d$ greatly increases runtime without empirically improving the classifier.
In order to choose a sparsity level $k$, we performed feature selection and trained a support vector machine for multiple choices of $k$.
Intuitively, taking $k$ too small will overly simplify the model and fail to match the inherent complexities of the data, while large choices for $k$ will lead to overfitting.
Before picking $k$, we first let this parameter range from $10$ to $280$ in increments of $10$, and for each value of $k$, we performed $10$ experiments in which we chose disjoint training and testing sets of sizes $1500$ and $2500$ respectively from each label class.
These sets were chosen uniformly at random without replacement.
We decided to make the training set small relative to the entire database due to the long runtime required to train a support vector machine---we used a much larger training set after we identified the ``optimal'' $k$.

\begin{figure}
\begin{center}
\includegraphics{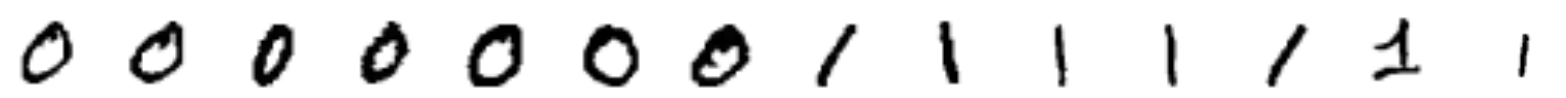}\\
\includegraphics{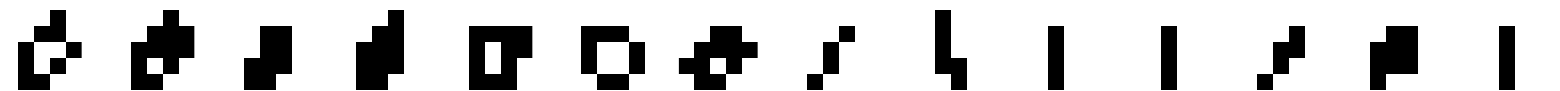}
\end{center}
\caption{
Sample of MNIST database of handwritten digits\cite{LeCunCB:15}.
(top row)
Each image is a $28\times 28$ matrix with entries in $[0,1]$.
(bottom row)
In order to minimize the runtime of learning, we decided to convert each image into a $5\times 5$ image by convolving with the indicator function of a $5\times 5$ block and sampling over a $5\times 5$ grid.
Finally, we thresholded the entries to produce a vector in $\{\pm1\}^{25}$.
Identifying the label \texttt{zero} with $-1$ and \texttt{one} with $1$, the classification task amounts to learning a function $f\colon\{\pm1\}^{25}\rightarrow\{\pm1\}$, and so we apply the method described in Section~\ref{section.erm}.
}
\label{fig.MNIST}
\end{figure}

\begin{figure}
\begin{center}
\includegraphics[height=0.35\textwidth]{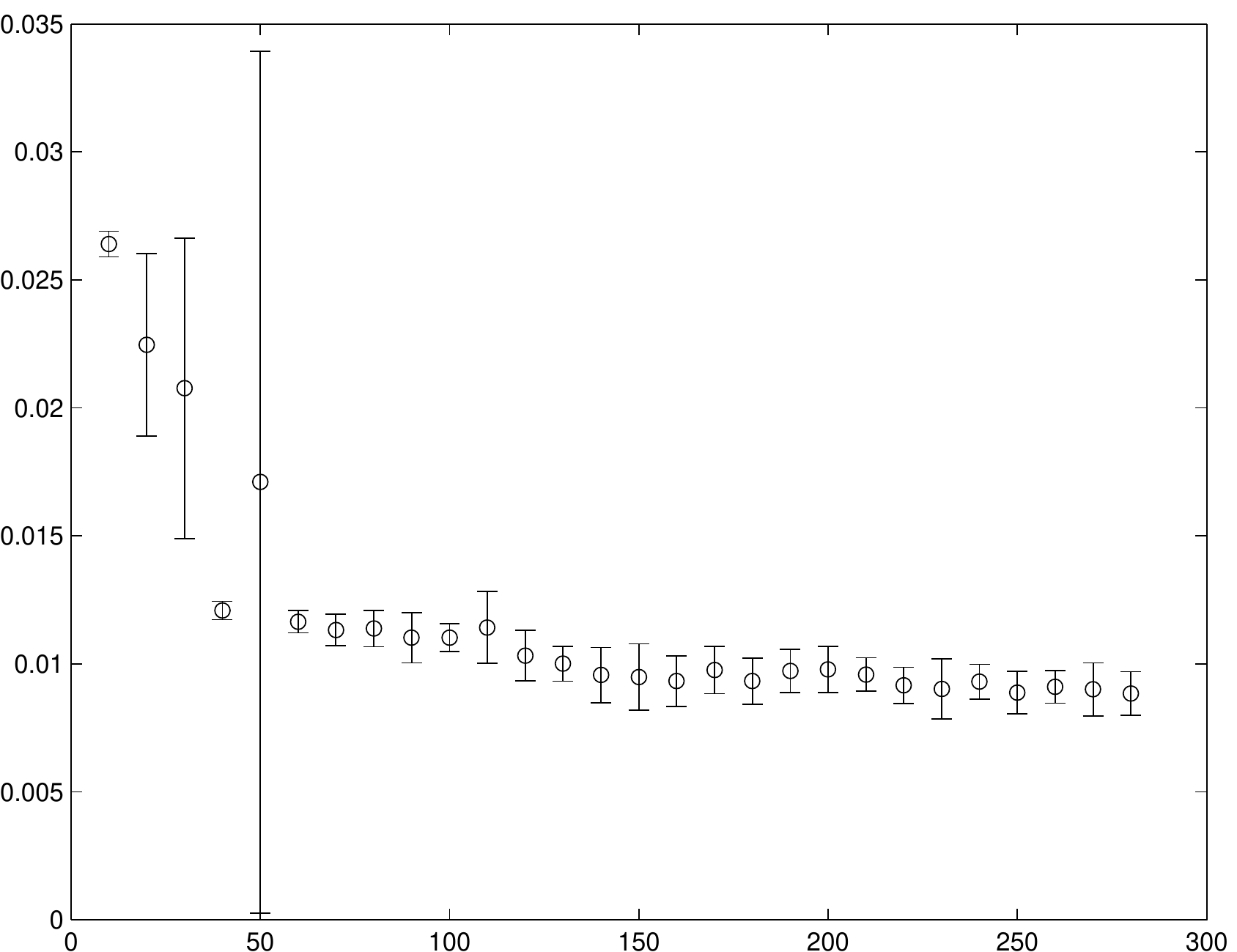}
\quad
\includegraphics[height=0.35\textwidth]{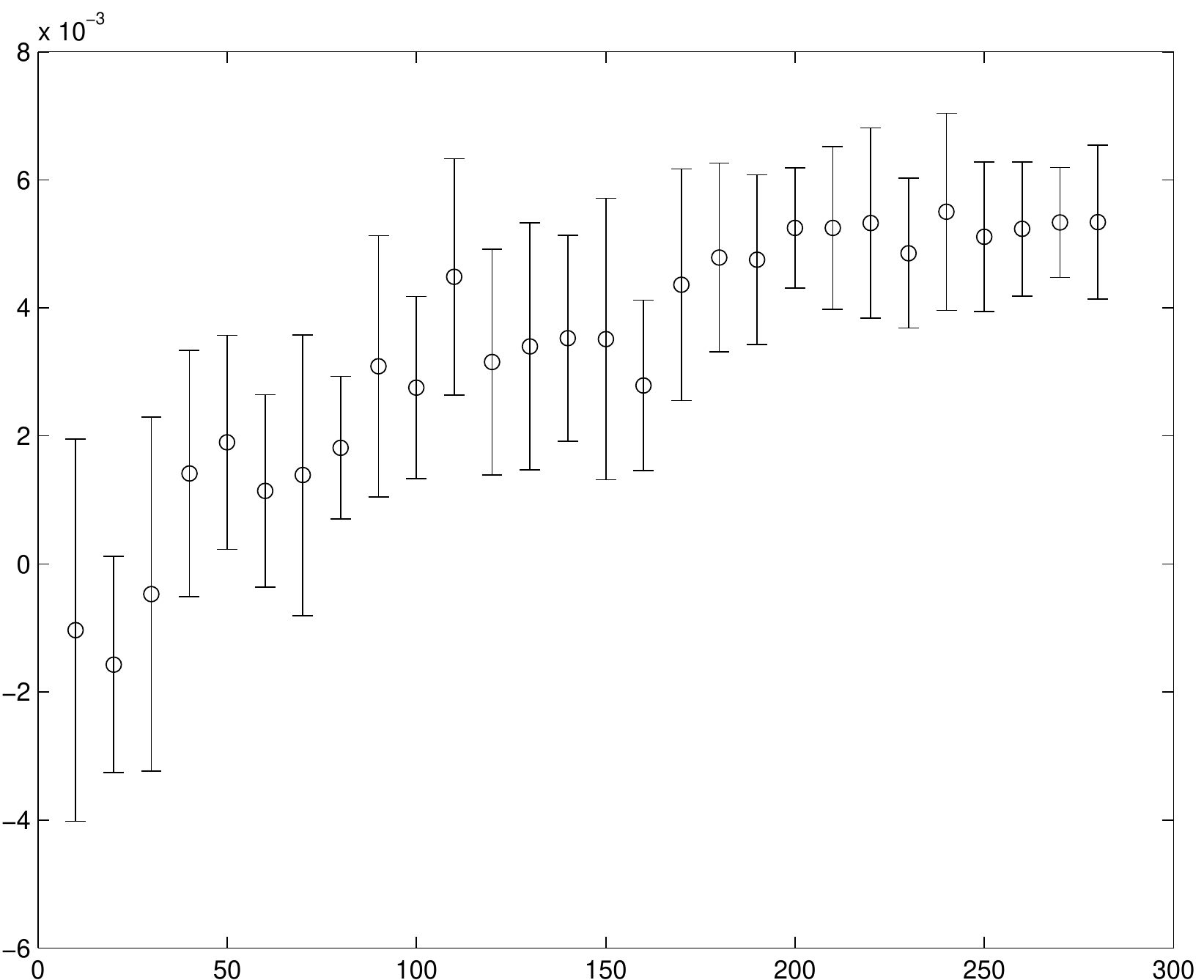}
\end{center}
\caption{
(left)
For each $k=10:10:280$, we ran $10$ trials of the following experiment:
Draw a random training set of $1500$ zeros and $1500$ ones, train a classifier with $d=3$ and parameter $k$ according to Section~\ref{section.erm}, draw a random test set of $2500$ zeros and $2500$ ones (disjoint from the training set), and record the misclassification rate.
Here, we plot the average rate along with errors bars indicating one standard deviation above and below.
For $k=50$, the error bars are large due to a single outlier.
The misclassification rate plateaus after around $k=150$, and so to minimize computational complexity, we selected $k=150$ for our final classifier.
For larger values of $k$, we expect the misclassification rate to increase due to overfitting, but long runtimes prevented us from performing such experiments.
(right)
For each trial of our experiment, we also recorded the misclassification rate in the training set.
The difference between the test and training misclassification rates forms a proxy for $R(\hat{f},f)-R_x(\hat{f},f)$.
Qualitatively, the plot of these differences matches the behavior predicted by the square-root term in Theorem~\ref{thm.VC}, but the values in
the plot are orders of magnitude smaller.
This suggests that the guarantee provided by the theorem is conservative in our setting.
}
\label{fig.error}
\end{figure}

\begin{figure}
\begin{center}
\includegraphics{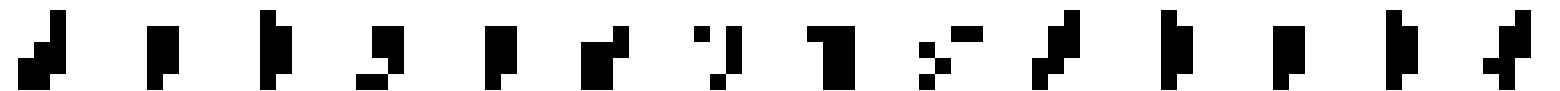}\\
\includegraphics{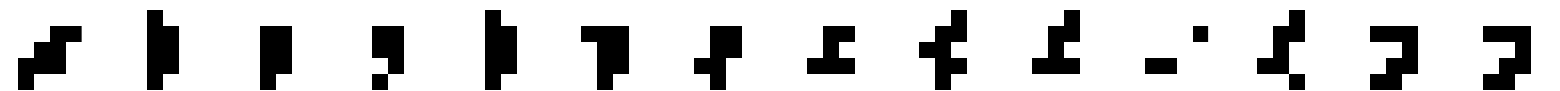}
\end{center}
\caption{
All 28 digits that were misclassified by our classifier.
Can you guess which are which?
We suspect the classifier would perform better if it received the original images instead of downsampled versions, but this would be computationally costly.
Indeed, we believe new algorithms need to be developed before our techniques can compete with the state of the art.
Spoiler: The last 7 digits are ones, and the rest are zeros.}
\label{fig.misclass}
\end{figure}

Given a training sample, we performed feature selection as described in Section~\ref{section.erm} and trained a support vector machine as is in \eqref{eq.SVM} with a loose $\ell_1$ restriction ($\tau = 1000$).
We then calculated the empirical risk using the test set and plotted the results in the left portion of Figure~\ref{fig.error}.
This figure shows the sparsity level $k$ versus the mean empirical risk with error bars denoting one standard deviation from the mean.
The large error bars for $k=50$ are due to a single extreme outlier.
Observe that empirical risk decreases and plateaus at around $k=150$, and while we would expect the curve to trend upward for larger $k$ due to overfitting, we terminated our computations at $k=180$ due to computation time.

For each trial of our experiment, we also recorded the misclassification rate in the training set.
The difference between the test and training misclassification rates forms a proxy for $R(\hat{f},f)-R_x(\hat{f},f)$.
Qualitatively, the plot of these differences matches the behavior predicted by the square-root term in Theorem~\ref{thm.VC}, but the values in the plot are orders of magnitude smaller.
This suggests that the guarantee provided by the theorem is conservative in our setting.

Since empirical risk fails to noticeably decrease after about $k=150$, we selected this value for $k$ and performed our variable selection and training process on a large training set.
Specifically, we randomly chose a training set consisting of $4000$ zeros and $4000$ ones (approximately two thirds of the entire database).
Our test set then consisted of $1900$ zeros and $1900$ ones.
With these training and test sets, our choice of $k$ achieved a misclassification rate of $0.74\%$ after a total runtime of $160$ seconds.
Considering we greatly downsampled our data from the MNIST database and we only attempted to classify zeros and ones, there is no direct comparison to be made with existing results\cite{LeCunCB:15}.
Still, it is worth mentioning that the best SVM classifier exhibits a misclassification rate (on all digits $0$ through $9$) of $0.56\%$, suggesting that our results are reasonable.
To make this point stronger, Figure~\ref{fig.misclass} displays all $28$ misclassified digits from the test set.
We contend that a human would likely misclassify these digits as well.
Can you pick out the zeros from the ones? 

\section{Conclusion and open problems}

This paper demonstrates the plausibility of learning Boolean functions with concentrated spectra, as well as its applicability to modern classification theory and application.
However, this paper offers more questions than answers.
For example, while we showed in Section~2 that the VC dimension $h$ of $C_{n,k}$ satisfies
\[
\max\{n,k\}\leq h \leq 2nk-O(k\log k),
\]
we have yet to identify how $h$ scales with $n$ and $k$.
This leads to our first open problem:

\begin{problem}
Determine the VC dimension of $C_{n,k}$.
\end{problem}

In Section~3, we proposed an algorithm for empirical risk minimization over polynomials in $C_{n,k}$ of degree at most $d$.
However, we still don't know if this algorithm produces an estimate $\hat{f}$ that satisfies a guarantee of the form \eqref{eq.approx ratio}.

\begin{problem}
Find a performance guarantee for the algorithm proposed in Section~3.
\end{problem}

Problem~\ref{prob.approx ratio} also remains open, but before this can be solved, one must first devise a candidate algorithm.
This leads to the following intermediate problem:

\begin{problem}
\label{prob.realworld}
Let $W$ denote the $2^n\times 2^n$ Walsh--Hadamard transform matrix, and let $z$ be some (nearly) $k$-sparse vector in $\mathbb{R}^{2^n}$.
Given a sample $x$ of $\ell$ entries of $f=\operatorname{sign}(Wz)$, find $\hat{f}\in C_{n,k}$ that well approximates $f$ (at least empirically) in $\operatorname{poly}(n,k,\ell)$ time.
\end{problem}

This is perhaps the most important open problem in this paper.
Considering our implementation in Section~5, the algorithm proposed in Section~3 exhibits certain computational bottlenecks due to the poor dependence on $d$.
As such, the methods of this paper might fail to compete with state-of-the-art classification until we find a solution to Problem~\ref{prob.realworld}.

Finally, while Section~5 demonstrated the effectiveness of our model for handwritten digits, we have yet to determine the full scope of its applicability.
This suggests the need for more numerical experiments, but there is also a theoretical result to seek.
Indeed, our approach was motivated by a certain hypothesis, and the confirmation of this hypothesis remains an open problem:

\begin{problem}
Prove that Boolean functions that are well approximated by learnable deep neural networks also enjoy a highly concentrated Fourier transform.
\end{problem}

Such a result would establish that empirical risk minimization over $C_{n,k}$ amounts to a relaxation of the corresponding optimization over deep neural networks, and so our model would consequently inherit the real-world utility of such networks.

%
%
%
%
%
%
%
%
%
%
%
%
%
%
%
%
%
%
%
%
%
%
%
%
%
%

\section*{ACKNOWLEDGMENTS}
This work was supported by an AFOSR Young Investigator Research Program award, NSF Grant No.\ DMS-1321779, and AFOSR Grant No.\ F4FGA05076J002.
The views expressed in this article are those of the authors and do not reflect the official policy or position of the United States Air Force, Department of Defense, or the U.S.\ Government.

\end{document}